\DeclareMathOperator*{\argmax}{arg\,max}
\begin{document}
\mainmatter              
\title{Budgeted Recommendation with Delayed Feedback}
\titlerunning{Budgeted Recommendation Delayed Feedback}  

\author{Kweiguu Liu\inst{1} \and Setareh Maghsudi\inst{2}}
\authorrunning{Liu et al.} 
%
\tocauthor{Ivar Ekeland, Roger Temam, Jeffrey Dean, David Grove,
Craig Chambers, Kim B. Bruce, and Elisa Bertino}

\institute{Kyushu University, Faculty of Information Science and Electrical Engineering, 819-0395 Fukuoka, Japan\\
\email{liu@agent.inf.kyushu-u.ac.jp, yokoo@inf.kyushu-u.ac.jp},
\and Ruhr-University Bochum, Faculty of Electrical Engineering and Information Technology, 44801 Bochum, Germany\\
\email{Setareh.Maghsudi@ruhr-uni-bochum.de}
}


\maketitle              

\begin{abstract}
In a conventional contextual multi-armed bandit problem, the feedback (or reward) is immediately observable after an action. Nevertheless, delayed feedback arises in numerous real-life situations and is particularly crucial in time-sensitive applications. The exploration-exploitation dilemma becomes particularly challenging under such conditions, as it couples with the interplay between delays and limited resources. Besides, a limited budget often aggravates the problem by restricting the exploration potential. A motivating example is the distribution of medical supplies at the early stage of COVID-19. The delayed feedback of testing results, thus insufficient information for learning, degraded the efficiency of resource allocation. Motivated by such applications, we study the effect of delayed feedback on constrained contextual bandits. We develop a decision-making policy, delay-oriented resource allocation with learning (DORAL), to optimize the resource expenditure in a contextual multi-armed bandit problem with arm-dependent delayed feedback. 
\keywords{Budget Constraints, Delayed Feedback, Online Learning, Resource Allocation}
\end{abstract}
%
\section{Introduction}
\label{sec:intro}
The contextual bandit problem is a well-known variant of the seminal multi-armed bandit problem: A decision-maker (agent, hereafter) observes some random context, i.e., features, at each round of decision-making. It then pulls one of the available arms and immediately receives a reward generated by the random reward process of the selected arm. Given the context, the agent maximizes each round's reward while effectively exploring the potential alternatives. The state-of-the-art applications of the contextual bandit problem include online advertising and personalized recommendation; nevertheless, the current research neglects those applications with additional constraints on resources \cite{badanidiyuru2014resourceful} that cause the following challenges: (i) The exploitation becomes limited by the resource available for exploration; (ii) The agent can no longer seek to maximize the instantaneous rewards as the arm with the highest reward can be an expensive one. Consequently, the work opted for maximizing the accumulated rewards.  One main assumption in budgeted learning is immediately observable feedback, but feedback is usually late in real-world applications.  Delayed feedback exacerbates the difficulty of exploration because the information about suboptimal arms procrastinates. Consequently, delayed feedback makes resource allocation inefficient for exploration and exploitation. A motivating example is the distribution of medical supplies at the early stage of COVID-19. During the outbreak, medical supplies, e.g., protective kits and ventilators, do not support urgent needs. That renders an optimal allocation of scarce resources imperative; nevertheless, delayed feedback about testing results hinders health administrators from accomplishing that goal, as inaccurate estimates aggravate the difficulty of making decisions for vital resources. To address this real-world challenge, the Greek government collaborated with the schools from the United States; \cite{bastani2021efficient} designed and deployed a national scale learning system named Eva to save lives. Eva's goal is to efficiently allocate scarce testing resources to identify as many infected passengers as possible while striving for more accurate estimates of COVID-19 prevalence from passengers. The challenge Eva faced was the delayed feedback of COVID test results because delayed feedback brings the following adverse effects in budgeted learning:
\begin{enumerate} 
    \item \textbf{Over-exploration:} Delayed feedback yields an inaccurate estimation of unknown parameters. Thus, the agent over-explores to retrieve the information that it could have simply obtained in a non-delayed scenario.  
    \item \textbf{Inefficient allocation:} Over-exploration results in ineffective budget expenditure because over-exploration depletes the resources that could have been used to explore arms with higher rewards and a longer response time.  
    \item \textbf{Ineffective allocation:} The delayed feedback in Eva was guaranteed to return in a shorter period. However, feedback in other real-world applications is usually not guaranteed to return, and such unresponsive feedback makes resource allocation challenging.   
\end{enumerate}
To tackle the challenges above, one shall incorporate delays in budget planning, i.e., one needs to consider expected delayed rewards of arms because resource allocation is also affected by delayed feedback.  Hence, we disentangle the adverse effects of delayed feedback by gradually filtering out less-responsive arms, i.e., arms with excessively long or unbounded delays.  Next, we formulate a delay-oriented linear program to handle online resource allocation.  Specifically, our proposed algorithm \emph{delay-oriented resource allocation with learning} (DORAL) consists of two stages: (i) using a fraction of the budget, the first stage identifies a set of top responsive arms that are likely to return feedback within a time window; (ii) with the remaining budget, the second stage uses the obtained set of top responsive arms to form delay-oriented linear programming to optimize resource allocation. 

\paragraph{CONTRIBUTIONS.} Our proposed two-stage algorithm ensures an efficient online resource allocation in a general setting, where arm-dependent delays can be excessively long or unbounded. The previous works in constraint learning circumvent the issues caused by delayed rewards via posterior sampling or patiently waiting for feedback. Such remedies are applicable given prior information about delays or when the waiting time is relatively short. Our alternative solution handles delayed feedback directly through the joint allocation of resources and learning time. Also, we propose a delayed version of the robust top arms identification method.
\section{Related Work}
\label{sec:related_work}
Recent research on learning with delayed feedback includes diverse applications such as \cite{chapelle2014simple,vernade2017stochastic} on personalized recommendation, \cite{chen2019task,ghoorchian2020multi} on edge computing,\cite{amuru2014optimal} on the military, and \cite{cesa2018nonstochastic} on communication networks. The cutting-edge research categorizes delayed feedback into two classes, namely, bounded- and unbounded delays. For example, a medical result that arrives within $48$ hours is a bounded delay, whereas a customer's feedback that usually never returns is unbounded. The state-of-the-art methods combine existing learning algorithms with the following concepts to handle delayed feedback: \emph{waiting} or \emph{cut-off}. The cut-off is a predetermined waiting window discarding any feedback outside its boundaries. Waiting for feedback to update the estimators is a popular method; nonetheless, it is appropriate when delays are bounded \cite{joulani2013online,zhou2019learning}.  In cases of significant delays, cut-off is the best fit, because waiting indefinitely without updating estimators can worsen bias and increase storage overhead. Reference \cite{vernade2020linear} applies the concept to delayed linear bandits. Similar applications appear in \cite{thune2019non}. 

Resource allocation with delayed feedback has gained attention due to various challenges in real-world applications. One solution to allocate resources with delays is via posterior sampling to estimate delayed feedback. The Greek government and \cite{bastani2021efficient} designed the system Eva for the urgent allocation of limited medical resources during the COVID outbreak, and Eva circumvents delayed feedback, i.e., testing results, by applying the empirical Bayes procedure to estimate the prevalence of passengers from different countries.  Similarly, Northern American ride-sharing company Lyft allocates resources to ad campaigns across periods to attract more drivers, but prospective drivers cannot hit the road until finishing the mandatory requirements. Thus, \cite{han2020contextual} applies Thompson Sampling to predict potential drivers, i.e., delayed rewards for resource allocation. Compared to the previous work, our work considers possible non-returned feedback that makes allocated resources ineffective, so we need to identify which arms are less responsive to minimize ineffective resource allocation.  
\section{Problem Formulation}
\label{sec:problem}
We consider an environment with finite classes of contexts $\mathcal{X} = \{1,\dots, J\}$. Let $\pi_1,\dots,\pi_J$ denote a known distribution over contexts. Each type of context is characterized by an unknown parameter vector $\theta_j \in \mathbb{R}^k$, where $k$ denotes the number of features. A finite set of arms (actions) is $\mathcal{A}$, and $|\mathcal{A}| = A$. At each time $t$, a subset of $\mathcal{A}$, denoted by $\mathcal{A}_{t}$, is available. Each arm $a_t \in \mathcal{A}_t$ has a feature vector $\mathbf{f}_{a_t} \in \mathbb{R}^d$ and a corresponding fixed cost $c_{a_t}$; $\mathbf{f}^{(j)}_{a_t}$ denotes the feature of $a_t$ selected for context $j$. Each arm has a delay distribution $\mathcal{D}_{a_t}$ with unknown mean $d_{a_t}$ supported on positive numbers, and delay $D_{a_t} \sim \mathcal{D}_{a_t}$ is sampled.  Because arms have different delays due to exogenous factors, $D_{a_t}$ does not depend on the contexts. Initially, the agent has some budget $B$. At each round $t \in T$, the remaining budget is $b_t$. The agent interacts with the environment as follows until the budget is exhausted. 
\begin{tcolorbox}[standard jigsaw, opacityback=0]
While $b_t > 0,$
\begin{itemize}
    \item At time $t$, the agent observes a context from $j \in \mathcal{X}$, where the contexts arrive independent of each other and the set of available arms $\mathcal{A}_t$; 
    \item The agent selects an arm $a_{t} \in \mathcal{A}_t$ to maximize the weighted reward $r^{(j)}_{t, u, a_t} = \langle \theta_j, \mathbf{f}_{a_t}\rangle$, where $u$ is the latest time in which the reward shall become observable. The agent pays the associated cost $c_{a_t}$ or not if she pulls no arm.  The agent is allowed to skip any round, i.e., pull no arm, whenever no arm can be recommended given the remaining budget.
\end{itemize}
\end{tcolorbox}
For $u > 0$, $\mathbb{P}(D_{a_t} \leq u) = \tau_{a_t}(u)$. Following \cite{gael2020stochastic}, we assume $\tau_{a_t}(u)$ satisfying the following inequality. Let $\alpha > 0$, 
\begin{equation}
    |1-\tau_{a_t}(u)| \leq u^{-\alpha}
\end{equation}
By the assumption above, a smaller $\alpha$ implies a lower chance of receiving feedback by $u$. When we consider the case with heavy-tailed delays, we assume $\mathbb{E}|D_{a_t} - d_{a_t}|^{1+\varepsilon} \leq v_{a_t} $, where $\varepsilon \in (0,1]$ and $v_{a_t}$ denotes variance. In other words, delays are not assumed to be sub-Gaussian. Also, we assume each arm's mean delay is not larger than a certain portion of the budget, i.e., $d_{a_t} \leq B/4$, because any mean delay larger than the budget is rarely observed and can be discarded.  The agent does not know feedback before $D_{a_t}$ exceeds $u-t$. Thus, we define the delayed reward formally as
\begin{equation}
\label{eq:DelayedReward}
\hat{r}_{t,u,a_t} = r_{t,u,a_t}\mathds{1}\{D_{a_t} \leq u-t\}
\end{equation}
where $\mathds{1}\{D_{a_t} \leq u-t\}$ is an indicator function that returns one if the reward of the decision made at $t$ is observed by $u$ and zero otherwise. The agent's objective is to select the arms sequentially to maximize the total accumulated reward given the budget constraint and delayed feedback. Formally,   
\begin{equation}
    \label{eq:goal}
    \begin{split}
        \text{maximize} \quad & U(T,B) = \mathbb{E}\left[\sum_{t=0}^T\hat{r}_{t,u,a_t}\right]\\
        \text{subject to} \quad & \sum_{t=1}^T c_{a_t} \leq B
    \end{split}
\end{equation}
where the expectation is taken over the distribution of contexts and rewards.  Let $U^*(T,B) = \mathbb{E}[\sum_{t=0}^T\hat{r}_{t,u}^*]$ denote the total optimal payoff when $\hat{r}_{t,u}^* = \max\limits_{a_t\in\mathcal{A}_t}\hat{r}_{t,u,a_t}$.  We measure the performance by the regret, i.e., the difference between the expected gain with hindsight knowledge and the actual gain, and it is defined as 
\begin{equation}
\label{eq:regret}
R = U^*(T,B)-U(T,B)
\end{equation}
The agent minimizes the regret by optimal arm selection.
\section{Algorithm Design}
\label{sec:Alg}
The proposed algorithm consists of two stages. The first stage is to identify a set of top responsive arms. We describe this stage in \textbf{Sec.~\ref{sec:search_responsive_arms}}. The second stage is online allocation with learning; we explain this stage in \textbf{Sec.~\ref{sec:resources with delays}}.
\subsection{Search for Top Responsive Arms}
\label{sec:search_responsive_arms} 
To mitigate the adverse effects of arm-dependent delays on resource allocation, we need to know the arms' response time in order to determine the cut-off $m$ for contextual learning in the next stage.  Hence, we first need to identify the top responsive $A' \leq A$ arms.  This task boils down to an identification problem in multi-armed bandits, where the rewards are the average over the number of rounds in which the agent observes delays.  Also, ranking arms according to their responsiveness enables us to neglect the ones with rare or no feedback, thus saving scarce resources for the rest. 

We build our identification algorithm upon a strategy family known as \emph{Successive Acceptance and Rejection} (SAR) from \cite{bubeck2013multiple,grover2018best}. Different variants of SAR can optimize either the budget for a given confidence level or the quality of exploration for a given threshold. However, in our setting, delays make it challenging to decide on a necessary budget in the first place. To address this issue, inspired by \cite{heidrich2009hoeffding}, we propose a variant of SAR, namely, \textit{Patient-Racing SAR} (PR-SAR), for a given threshold. \textbf{Algorithm~\ref{algo:racing_SAR}} describes PR-SAR.  

Let $S_t$ denote a set of accepting arms, $L_t$ a set of remaining arms at time $t$, and $E_t$ a set of rejected arms. Till identifying top responsive arms, $|S_t| = A'$, PR-SAR continuously pulls arms from $L_t$, and the method determines acceptance by comparing confidence bounds. However, due to \emph{delayed response} and possible \emph{non-sub-Gaussian delays}, we need a robust estimation method, namely, \emph{median-of-means} estimator by \cite{bubeck2013bandits}, to measure the responsiveness of arms. The core idea is to prepare several disjoint baskets, calculate the standard empirical mean of received feedback in each basket and take a median value of these empirical means. 
$X \vee Y \triangleq $
Specifically, for some arm $a$, $T_a(u) = \sum_t^u\mathds{1}\{a_t = a\}$
denote the number of times the agent observes arm $a$ by time $u$, and $h = \lfloor8\log(\frac{e^{1/8}}{\delta} \wedge \frac{T_a(u)}{2}))\rfloor$ and $N(u) = \lfloor \frac{T_a(u)}{h} \rfloor$. Let $D_{a,t} = D_{a_t}\mathds{1}\{a_t = a\}$. Each basket's estimated expected waiting period then yields
\begin{equation*}
    \hat{d}_{a,1} = \frac{1}{N(u)}\sum_{t=1}^{N(u)}D_{a, t}, \hat{d}_{a,2} = \frac{1}{N(u)}\sum_{t=N(u)+1}^{2N(u)}D_{a,t}, \dots, \hat{d}_{a,h} = \frac{1}{N(u)}\sum_{t=(h-1)N(u)+1}^{hN}D_{a, t}.
\end{equation*}
Let $d_a^M$ denote the median-of-means estimator of empirical waiting periods. We first need the following lemma to bound $d_a^M$; the lemma states how the empirical mean behaves when delayed feedback exists.  Due to the limited space, the proof is omitted.   
\begin{lemma}
\label{thm:empirical_delayed_mean}
    For some arm $a$ and $\alpha, \delta > 0$, with probability at least $1-\delta - B^{-\alpha}$, $\hat{d}_a \leq d_a + \sqrt{\frac{2B\log\frac{2}{\delta}}{T_a(u)}} + 2d_aT(u)^{-(\alpha \wedge 1/2)}$ 
\end{lemma}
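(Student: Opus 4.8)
The plan is to write $\hat d_a=\tfrac1{T_a(u)}\sum_{i=1}^{T_a(u)}D_a^{(i)}\,\mathds 1\{D_a^{(i)}\le u-s_i\}$, where $s_1<\dots<s_{T_a(u)}$ are the rounds on which arm $a$ was pulled and $D_a^{(i)}\sim\mathcal D_a$, and to control the deviation by conditioning first on the $\sigma$-field $\mathcal F$ generated by the (adaptively chosen) pull times and the context sequence, so that conditionally on $\mathcal F$ the $D_a^{(i)}$ are i.i.d.; the unconditional statement then follows because the bound obtained will be deterministic given $\mathcal F$. I would introduce the truncation $\bar D_a^{(i)}=\min\{D_a^{(i)},B\}$ and split $\hat d_a-d_a$ into a censoring/truncation gap $\hat d_a-\tfrac1{T_a(u)}\sum_i\bar D_a^{(i)}$, a fluctuation term $\tfrac1{T_a(u)}\sum_i\bar D_a^{(i)}-\mathbb E[\bar D_a\mid\mathcal F]$, a truncation-bias term $\mathbb E[\bar D_a\mid\mathcal F]-d_a\le 0$, and the censoring bias $\tfrac1{T_a(u)}\sum_i\mathbb E[D_a\mathds 1\{D_a>u-s_i\}\mid\mathcal F]$.

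For the fluctuation term, the truncated summands lie in $[0,B]$ and are conditionally i.i.d., so a Bernstein-type inequality, using the moment control $\mathbb E|D_a-d_a|^{1+\varepsilon}\le v_a$ for the variance proxy together with the range truncation at $B$, yields $\tfrac1{T_a(u)}\sum_i\bar D_a^{(i)}-\mathbb E[\bar D_a\mid\mathcal F]\le\sqrt{2B\log(2/\delta)/T_a(u)}$ with conditional probability at least $1-\delta$. For the censoring/truncation gap, note that $\hat d_a\le\tfrac1{T_a(u)}\sum_i D_a^{(i)}$ because the indicator only removes mass, and $D_a^{(i)}=\bar D_a^{(i)}$ unless $D_a^{(i)}>B$; by the tail assumption $\mathbb P(D_a>B)=|1-\tau_a(B)|\le B^{-\alpha}$, so up to an extra $B^{-\alpha}$ of failure probability the gap is $\le 0$ and the relevant delays may be treated as bounded by $B$ throughout.

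It remains to bound the truncation-bias and censoring-bias terms, which are negative or small but are the pieces one must track with the same rate on both sides, since this estimate feeds the later median-of-means bound for $d_a^M$. Here I would write $d_a-\mathbb E[\bar D_a\mid\mathcal F]=\mathbb E[(D_a-B)_+]\le\mathbb E[D_a\mathds 1\{D_a>B\}]$ and bound each $\mathbb E[D_a\mathds 1\{D_a>u-s_i\}]$ by interpolating the direct polynomial-tail estimate $\mathbb E[D_a\mathds 1\{D_a>w\}]\lesssim w^{1-\alpha}$ against the Cauchy--Schwarz/Markov estimate $\mathbb E[D_a\mathds 1\{D_a>w\}]\le d_a\,\mathbb P(D_a>w)^{1/2}\cdot\mathrm{const}\le\mathrm{const}\cdot d_a\,w^{-\alpha/2}$, invoking $d_a\le B/4$ and a worst-case lower bound $u-s_i\ge c\,T(u)$ for a constant fraction of the pulls; optimizing the interpolation point produces the claimed factor $2d_a\,T(u)^{-(\alpha\wedge1/2)}$, the cap at $1/2$ arising precisely from the Markov branch. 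A union bound over the two bad events (probabilities $\le\delta$ and $\le B^{-\alpha}$), followed by removing the conditioning on $\mathcal F$, completes the argument.

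The main obstacle is the interaction of censoring with non-sub-Gaussian delays: because feedback may not yet have returned and $D_a$ can be heavy-tailed, no concentration inequality applies to the raw delays, so one must truncate at $B$ and simultaneously account for both the still-pending observations and the truncated tail, whose combined bias decays at a rate set by how old the pulls are, which is itself data-dependent, hence the conditioning on $\mathcal F$. Producing the exponent $\alpha\wedge1/2$ rather than $\alpha$, and the clean constant $2d_a$, hinges on choosing the breakpoint between the polynomial-tail bound and the Markov bound correctly; the remaining steps are routine.
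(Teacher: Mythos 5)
The paper gives no proof of this lemma (it says ``Due to the limited space, the proof is omitted''), so your attempt can only be judged on its own terms and against how the lemma is used in the proof of Theorem~2, where the failure probability decomposes as $\frac{v_a}{N(u)^{\varepsilon}\zeta^{1+\varepsilon}}+2\exp\left(-2\zeta^2N(u)/B^2\right)+B^{-\alpha}$. Your skeleton --- condition on the (adaptive) pull times, truncate the delays at $B$ and charge $\mathbb{P}(D_a>B)\le B^{-\alpha}$ to the failure probability, and split the error into a concentration term plus the censoring bias $\frac{1}{T_a(u)}\sum_i\mathbb{E}\left[D_a\mathds{1}\{D_a>u-s_i\}\right]$ --- is the natural one and is consistent with that later usage.

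However, the quantitative steps that are supposed to produce the three displayed terms all have gaps. First, your Cauchy--Schwarz bound $\mathbb{E}[D_a\mathds{1}\{D_a>w\}]\le \mathrm{const}\cdot d_a\,\mathbb{P}(D_a>w)^{1/2}$ presupposes a second-moment control $\mathbb{E}[D_a^2]\lesssim d_a^2$, but the paper assumes only $\mathbb{E}|D_a-d_a|^{1+\varepsilon}\le v_a$ with $\varepsilon\in(0,1]$, so the variance may be infinite; H\"older with the available moment gives exponent $\varepsilon\alpha/(1+\varepsilon)$ rather than $\alpha/2$, and the truncated second moment $\mathbb{E}[\bar D_a^2]\le Bd_a$ gives prefactor $\sqrt{Bd_a}$, which is $\ge 2d_a$ precisely because $d_a\le B/4$ --- so neither route yields the claimed constant $2d_a$. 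Second, the assertion that $u-s_i\ge c\,T(u)$ for a constant fraction of pulls does not address the real difficulty: the pulls may all be recent (worst case, the last $T_a(u)$ consecutive rounds), in which case the newest pulls have $u-s_i=O(1)$ and each contributes censoring bias of order $d_a$; the correct treatment bounds $\frac{1}{T_a(u)}\sum_{k=1}^{T_a(u)}k^{-\beta}\lesssim T_a(u)^{-(\beta\wedge1)}$ rather than discarding those pulls. Third, Hoeffding on $[0,B]$ gives deviation $\sqrt{B^2\log(2/\delta)/(2T_a(u))}$, not $\sqrt{2B\log(2/\delta)/T_a(u)}$; getting a single power of $B$ requires Bernstein with $\mathrm{Var}(\bar D_a)\le Bd_a$ and absorbing the additive $B\log(2/\delta)/(3T_a(u))$ term, which you assert but do not carry out (and which sits uneasily with the pure-Hoeffding form $\exp(-2\zeta^2N(u)/B^2)$ the paper itself uses downstream). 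The architecture is right, but none of the specific constants and exponents in the statement is actually established by the argument as written.
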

The following theorem states the robust upper confidence bound (UCB) of patient median-of-means estimation. 
\begin{theorem}
\label{thm:delayed_robust_UCB}
Let $\alpha > 0, \delta > 0$. For some $a$ and any $t > A$ with probability $1 - \delta - B^{-\alpha}$,
\begin{equation}
    |d_a^M - d_a| \leq \sqrt{\frac{2\log\left(\frac{16}{1-B^{-\alpha}}\right)}{T_a(u)}} + \frac{B}{2}T_a(u)^{-(\alpha \wedge 1/2)}
\end{equation}
\end{theorem}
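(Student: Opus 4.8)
The plan is to run the standard median-boosting argument for the median-of-means estimator on top of the single-block concentration bound of Lemma~\ref{thm:empirical_delayed_mean}. Fix an arm $a$ and a round $t>A$; the condition $t>A$ ensures $T_a(u)\ge 1$ for every arm, and the $\wedge\,T_a(u)/2$ inside the definition of $h$ ensures every block holds at least two samples, so the estimator is well defined. Since delays are context-independent, the observations $D_{a,t}$ gathered over the rounds in which $a$ is pulled are i.i.d.\ draws from $\mathcal{D}_a$, and the $h$ blocks forming $\hat d_{a,1},\dots,\hat d_{a,h}$ use disjoint groups of $N(u)=\lfloor T_a(u)/h\rfloor$ such rounds, so the block means are mutually independent (they are not identically distributed, because later pulls within a block are more heavily censored, but that enters only through the worst case already absorbed into Lemma~\ref{thm:empirical_delayed_mean}).

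First I would apply Lemma~\ref{thm:empirical_delayed_mean} to a single block, i.e.\ with $T_a(u)$ replaced by $N(u)$ and with a constant confidence level $p_0<\tfrac12$ to be fixed below. Working on a global event $\mathcal{E}$ with $\mathbb P(\mathcal{E})\ge 1-B^{-\alpha}$ (the event that no delay of $a$ overshoots the budget window, which is the source of the $B^{-\alpha}$ term in the lemma), each block then satisfies $|\hat d_{a,i}-d_a|\le\varepsilon$ with probability at least $1-p_0$, where $\varepsilon=\sqrt{2B\log(2/p_0)/N(u)}+2d_a\,N(u)^{-(\alpha\wedge1/2)}$. Calling block $i$ \emph{bad} when this fails, the bad indicators are independent with individual probability at most $p_0$, so the bad count $Z$ has $\mathbb E[Z]\le p_0 h$ and Hoeffding's inequality gives $\mathbb P(Z\ge h/2)\le\exp\!\big(-2(\tfrac12-p_0)^2h\big)$. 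Since $d_a^M$ can miss $d_a$ by more than $\varepsilon$ only when at least half the blocks are bad, the same quantity bounds $\mathbb P\big(|d_a^M-d_a|>\varepsilon\mid\mathcal{E}\big)$.

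It then remains to close the loop on constants. Taking $p_0=\tfrac14$ turns the Hoeffding exponent into $-h/8$, and the definition $h=\lfloor 8\log(\tfrac{e^{1/8}}{\delta}\wedge\tfrac{T_a(u)}{2})\rfloor$ forces $\exp(-h/8)\le\delta$ in the regime where the $\delta$-term is binding; together with the $\mathbb P(\mathcal{E}^c)\le B^{-\alpha}$ cost this yields the probability $1-\delta-B^{-\alpha}$. Finally I would rewrite $\varepsilon$ in terms of $T_a(u)$ by substituting $N(u)=\lfloor T_a(u)/h\rfloor$, bounding $h$ from above by $8\log(e^{1/8}/\delta)$, and using the standing assumption $d_a\le B/4$ to replace $2d_a$ by $B/2$ in the bias term; collecting the constants and the $h$-factor gives the stated bound $\sqrt{2\log(16/(1-B^{-\alpha}))/T_a(u)}+\tfrac B2\,T_a(u)^{-(\alpha\wedge1/2)}$.

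The main obstacle is the bias term: unlike the sampling fluctuation, the censoring bias $2d_aN(u)^{-(\alpha\wedge1/2)}$ is essentially common to every block, so the median does not attenuate it; it must be carried verbatim through the boosting step, and one has to be careful that it is controlled by the block length $N(u)$ and not by $T_a(u)$ until the final rescaling. The remaining difficulty is purely the constant bookkeeping of the last step — fixing $p_0$, matching the Hoeffding exponent against the definition of $h$, and tracking the logarithmic constant and the $B/2$ coefficient — after which the argument is the textbook median-of-means template instantiated with the delayed-feedback concentration bound.
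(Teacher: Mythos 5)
Your proposal follows essentially the same route as the paper's proof: apply the single-block concentration of Lemma~\ref{thm:empirical_delayed_mean}, declare a block bad when it misses by $\varepsilon$, fix the per-block failure probability at $1/4$ via the choice of $\zeta$, and use Hoeffding's inequality on the count of bad blocks so that the exponent $-h/8$ matches the definition of $h$ and yields $\le\delta$, with the censoring bias carried through the median unattenuated. The only difference is bookkeeping: you split the $B^{-\alpha}$ contribution off into a global conditioning event $\mathcal{E}$, whereas the paper folds it into the per-block Bernoulli parameter as $p\le 1/4+B^{-\alpha}$ (which, taken literally, makes its step $\exp(-2h(1/2-p)^2)\le\exp(-h/8)$ require exactly the cleaner separation you propose).
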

\begin{proof}
Let $Z_l = \mathds{1}\{\hat{d}_l > d_a + \varepsilon,  \forall l \in \{1,\dots,h\}\}$. According to \textbf{Lemma~\ref{thm:empirical_delayed_mean}}, $Z_l$ follows a Bernoulli distribution with $p \leq \frac{v_a}{N(u)^{\varepsilon}\zeta^{1+\varepsilon}} + 2\exp{\left(\frac{-2\zeta^2 N(u)}{B^2}\right)} + B^{-\alpha}$.  If $\zeta = \sqrt{\frac{2\log\left(\frac{16}{1-B^{-\alpha}}\right)}{N(u)}}$, we have $p \leq 1/4 + B^{-\alpha}$. By Hoeffding's inequality, we have 
\begin{equation*}
    \mathbb{P}(d^M_a > d_a + \varepsilon) = \mathbb{P}\left(\sum_{l=1}^h Z_l \geq \frac{h}{2}\right ) \leq \exp{(-2h(1/2-p)^2)} \leq \exp{(-h/8)} \leq \delta
\end{equation*}
\end{proof}

According to \textbf{Theorem}~\ref{thm:delayed_robust_UCB}, PR-SAR accepts any arm whose lower confidence bound (LCB) is larger than at least $K'$ of UCB's.  Specifically, PR-SAR compares the following delayed robust UCB's and LCB's. 
\begin{equation}
    \label{eq:robust_UCB}
    UCB_{d_a}(t) = d^M_a(t) + \sqrt{\frac{2\log\left(\frac{16}{1-B^{-\alpha}}\right)}{T_a(u)}} + 2d_aT_a(u)^{-(\alpha \wedge 1/2)} 
\end{equation}
\begin{equation}
    \label{eq:robust_LCB}
    LCB_{d_a}(t) = d^M_a(t) - \sqrt{\frac{2\log\left(\frac{16}{1-B^{-\alpha}}\right)}{T_a(u)}} - 2d_aT_a(u)^{-(\alpha \wedge 1/2)}
\end{equation}
After finding top responsive arms, PR-SAR determines the remaining budget for resource allocation, i.e., $B_{ac} = B - B_{id}$ where $B_{id}$ is the amount of budget spent on identification. PR-SAR can simply decides the cut-off  $m = \max_{a\in\mathcal{A}}UCB_{d_a}$.  PR-SAR by construction is Hoeffding race method, so it can find top responsive arms.
\algrenewcommand\algorithmicrequire{\textbf{Input:}}
\algrenewcommand\algorithmicensure{\textbf{Output:}}
\begin{algorithm}[!ht]
\caption{Patient Racing SAR}\label{algo:racing_SAR}
\begin{algorithmic}
    \Require $\mathcal{A}, B, S_1 = \{\emptyset\}, L_1 = \{1, \dots, K\}$
    \Ensure $A^{\prime}$ accepted arms and $m = max\{UCB_{d_a}, \forall a \in \mathcal{A}\}$
    \While{$|S_t| < A'$}{
	\For{$a \in L_t$}
        \State pull $a$ and compute $UCB_{d_a}$ using \textbf{Eq.}~\ref{eq:robust_UCB}, $LCB_{d_a}$ using \textbf{Eq.}~\ref{eq:robust_LCB}
    \EndFor
    \For{$a \in L_t$}
        \Comment{Update top accepted arms}
        \If{$|\{a' \mid LCB_{d_a} > UCB_{d_{a'}}\}| > A' - |S_t|$}
            \State $S_{t} \leftarrow S_{t} \bigcup \{ a\}$
            \State $L_{t} \leftarrow L_{t} \backslash \{ a\}$ 
        \EndIf
    \EndFor
    }
    \EndWhile
\end{algorithmic}
\end{algorithm}
\subsection{Resource Allocation with Delays}
\label{sec:resources with delays}
We first introduce the decision rule when delayed feedback exists, and then explain online resource allocation with delayed feedback.
\subsubsection{Learning Estimators with Delayed Feedback}
Waiting indefinitely for feedback results in excessive computation overhead and swift exhaustion of scarce resources.  One solution to mitigate the problem is to select a cut-off parameter $m$. Thus the delayed reward $\hat{r}_{t,u,a}$ can be restated as
\begin{equation}
\Tilde{r}_{t,u,a} = r_{t,u,a}\mathds{1}\{D_{t,a} \leq \min(m, u-t)\}
\end{equation} 
Let $\lambda > 0$ be a regularization parameter. Following \cite{vernade2020linear}, we estimate $\theta$ using the least square method as 
\begin{equation}
\label{eq:estimated_theta}
\hat{\theta}_t = \left(\sum_{t=1}^{u-1} \mathbf{f}_{a_t} \mathbf{f}_{a_t}^{\top}+\lambda I\right)^{-1}\left(\sum_{t=1}^{u-1} \Tilde{r}_{t,u,a} \mathbf{f}_{a_t}\right) = V_t(\lambda)^{-1}G_t.
\end{equation}
Let $f_{t,\delta}=\sqrt{\lambda} + \sqrt{2\log\left(\frac{1}{\lambda}) + k\log(\frac{k\lambda+t}{k\lambda}\right)}$.  Theorem 1 in \cite{vernade2020linear} validates that, after a period of learning, the distance between $\hat{\theta}$ and $\theta$ remains bounded with high probability.  Hence, we propose the following decision rule to select arms. 
%
\begin{tcolorbox}[standard jigsaw, opacityback=0]
\begin{itemize}
\item At each round and for each arm $a$, define the index $\gamma_{t}(a)$ as 
\begin{equation}
\label{eq:delayed_LinUCB}
\langle\hat{\theta}_{t}, \mathbf{f}_a\rangle + \left(2f_{t,\delta} + \sum_{t=u-m}^{u-1}\|\mathbf{f}_{a_t}\|_{V_t(\lambda)^{-1}}\right)\|\mathbf{f}_{a}\|_{V_t(\lambda)^{-1}}
\end{equation} 
The index is the linear upper confidence bound (LinUCB) within the cut-off.    
\item The agent picks the arm with the highest index, i.e., $a_{t}^*=\argmax{\gamma_t(a)}$, $a \in \mathcal{A}_{t}$.
\end{itemize} 
\end{tcolorbox}

\subsubsection{Resource Allocation with Delayed Learning}
\label{sec:full_method}
Optimization of resource allocation in (\ref{eq:goal}) with the hard budget constraint is especially challenging with unknown delays. Hence, we approximate the optimal solution with a relaxed budget constraint, i.e., the average budget constraint $\rho = \frac{B}{T}$ at each round. Inspired by the approximation method in \cite{wu2015algo}, we develop an approach for near-optimal delay-oriented allocation. In the following, we drop the parameter $m$ from $\tau_a(m)$ unless it is necessary to avoid ambiguity.  For simplicity, we assume that the distribution of contexts is known and static. We have $\tau_a \rightarrow 1$ if $m \rightarrow \infty$, but such $m$ increases the regret significantly. To simplify the analysis, we assume $\tau_a$'s are given. Let $\eta^*_j = \max_{a \in \mathcal{A}}\tau_a\Tilde{r}_{j,a}$ the best expected delayed reward the agent can obtain under context $j$ and $\Tilde{a}^*_j = \argmax_{a \in \mathcal{A}}\Tilde{r}_{j, a}$ the corresponding arm.  Let $ \mathbf{p}= (p_1, \dots, p_j)$ denote a probability vector, and the agent's goal is to solve the following linear programming at each round: 
\begin{equation}
\label{eq:lp_m}
\begin{split}
\text{$LP_m$~~~maximize}_{\mathbf{p}} \quad & \sum_j^J p_j \pi_j \eta^*_j \\
\text{subject to} \quad & \sum_j^J p_j \pi_j \leq \rho \\
\end{split}
\end{equation}
$LP_m$ maximizes the expected delayed reward with arms with higher probabilities to return, while its constraint considers expected delayed costs to avoid spending resources on arms with no feedback possibilities.  The solution of $LP_{m}$ can be expressed with some threshold $j(\rho)$, which is a function of the average constraint ratio $\rho$, and the reinterpretation of $LP_m$ can help simplify the regret analysis. Thus,    
\begin{equation}
\label{eq:threshold_LP_m}
j(\rho) = \max\{j: \sum_{j'}^{J} \pi_{j'} \leq \rho\}
\end{equation}
\begin{equation}
\label{eq:ranking_LP_m}
p_j(\rho)= 
\begin{cases}
1, & \text{if } 1 \leq i \leq j(\rho) \\
\frac{\rho - \sum_{j^{\prime} = 1}^{j(\rho)} \pi_{j^{\prime}}} 
{\pi_{j(\rho)+1}}, & \text{if } i=j(\rho)+1\\
0, & \text{if } i > j(\rho)  + 1 
\end{cases}
\end{equation}
and the optimal value of $LP_m$ can be expressed with (\ref{eq:threshold_LP_m}) and (\ref{eq:ranking_LP_m}).
\begin{equation}
\label{eq:delayed_optimum}
    v(\rho) = \left(\sum_{j'}^{j(\rho)} \pi_{j'} \hat{\eta}_{j'}^*\right) + p_{j(\rho) + 1} \pi_{j(\rho) + 1} \eta^*_{j(\rho)+1}
\end{equation} 

With \textbf{Algorithm~\ref{algo:racing_SAR}} and the resource allocation rule (\ref{eq:lp_m}), we present our proposed decision-making strategy, i.e., delay-oriented resource allocation with learning (DORAL), in \textbf{Algorithm~\ref{algo:full}}. The algorithm consists of two stages.  The first stage spends a portion of the budget $B_{id}$ and time $T_{id}$ to identify top responsive arms, and the second stage explores and exploits with the remaining budget $B_{ac}$ and time $T_{ac}$ using the cut-off obtained in the first stage.  

\begin{algorithm}[!h]
\caption{Delay-Oriented Resource Allocation with Learning}\label{algo:full}
\begin{algorithmic}
\Require $\mathcal{A}, S_1 = \{\emptyset\}$, $B$, $T$, $\lambda \in (0,1)$
\While{$|S_t| \leq A'$}
    \State Algorithm~\ref{algo:racing_SAR}
\EndWhile
\While{$B_{ac} > 0$}
    \State Observe context and pick $\Tilde{a}^*_j(t)$ with $p_j(\frac{b_t}{t})$ that satisfies $LP_m$ with $\hat{\eta}^*_j$
    \State  Update $b_t, \hat{\theta}_j, V^{(j)}_{\lambda}(t)$, and $G^{(j)}_t$ if delayed feedback returns.
\EndWhile
\end{algorithmic}
\end{algorithm}

\section{Experiments}
Due to the difficulty of finding a dataset suitable for our scenarios, we evaluate the performance of DORAL with a synthetic dataset with the following settings: The budget is $85,000$ to ensure at least $50,000$ rounds. There are $10$ context classes, and the distribution of the contexts is $\boldsymbol{\pi} = [0.09, 0.15, 0.11, 0.05, 0.1, 0.05, \\0.08, 0.14, 0.13, 0.1]$. There are $10$ arms, and each has a unit cost. Each context and arm has five features, where we represent each feature by some value between $(0,1)$. We use geometric and Pareto distributions to generate delays for each assigned arm. For the scenario of diverse delays, we have geometric delays for the arms, and their expected delays are $[100, 120, 140, 160, 200, 220, 240, 260, 280, 300]$.  For Pareto delays, we set each arm's minimum delay as $[200, 220, 240, 260, 280, 320, \\340, 360, 380, 400]$, and the arms share the same shape parameter $\alpha = 2$.  
For the scenario of similar and short delays, we use [100,110,120,130,140,150,160,170,180,190] for geometric means and Pareto minimum values.  We compare our proposed algorithm with the following benchmarks.  \textbf{Delayed-LinUCB (D-LinUCB)} by \cite{vernade2020linear}: The method selects the arms with the highest reward in each context class. The method selects the arms with the highest reward in each class of context.  Because the type of delay distribution is unknown, we choose $m=500$ for the cut-off $m$. This method can be interpreted as a greedy method in our scenario. Thus, we have the lowest $\tau_a(m) = 0.81$ in the case of geometric delays, and the lowest $\tau_a(m) = 0.36$ in the case of Pareto delays. \textbf{Random Delayed-LinUCB (Random)}: The method is similar to Delayed-LinUCB; Nevertheless, the selection follows with the probability of $\rho$, i.e., the remaining budget concerning the given budget. It also uses $m=500$ for the cut-off.  \textbf{Delayed adaptive linear programming (D-ALP)} of \cite{wu2015algo}: The method is similar to our proposed method, but we let $\forall a \in \mathcal{A}, \tau_a(m)=1$. Each figure is the average of 50 runs. 
As DORAL spends some time identifying top responsive arms, it starts late to accumulate rewards. In \textbf{Fig.~\ref{fig:result-1}}, because delays are short, DORAL and D-ALP are overlapping. 
In \textbf{Fig.~\ref{fig:result-2}} (a), D-ALP identifies a higher $m > 500$, so it's expected to have more regret according to the theorems. Nevertheless, DORAL outperforms D-ALP when facing heavy-tailed delays in \textbf{Fig.~\ref{fig:result-2}} (b) even though both of them identify similar $m$, while they come close in  \textbf{Fig.~\ref{fig:result-2}} (c).  These cases indicate that utilizing expected delayed rewards in diverse delays can simultaneously optimize rewards and learning altogether.
\begin{figure}[t]
    \centering
    \subfigure[Geometric Delays]{\includegraphics[scale=0.35]{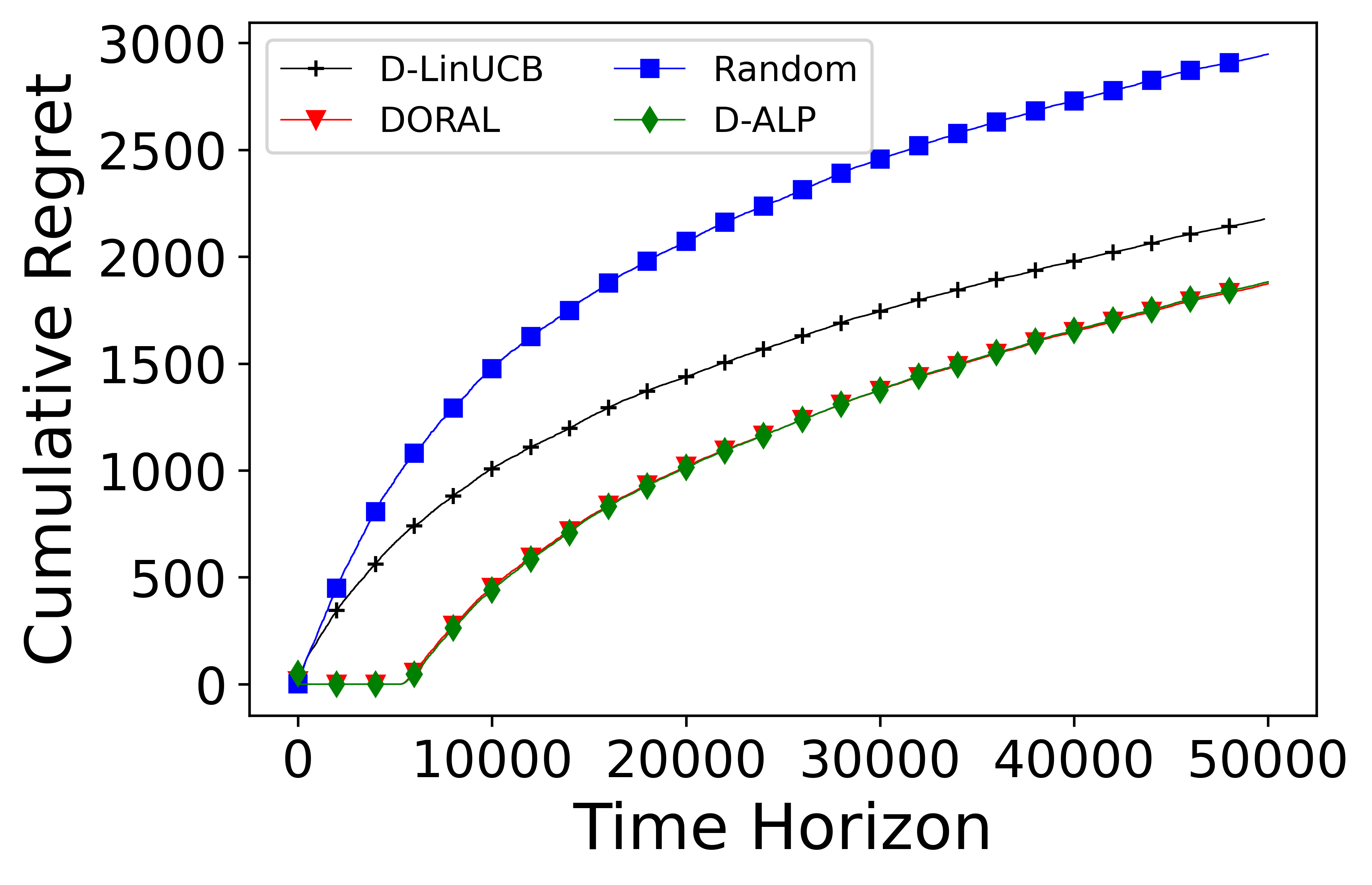}}
    \subfigure[Pareto Delays]{\includegraphics[scale=0.35]{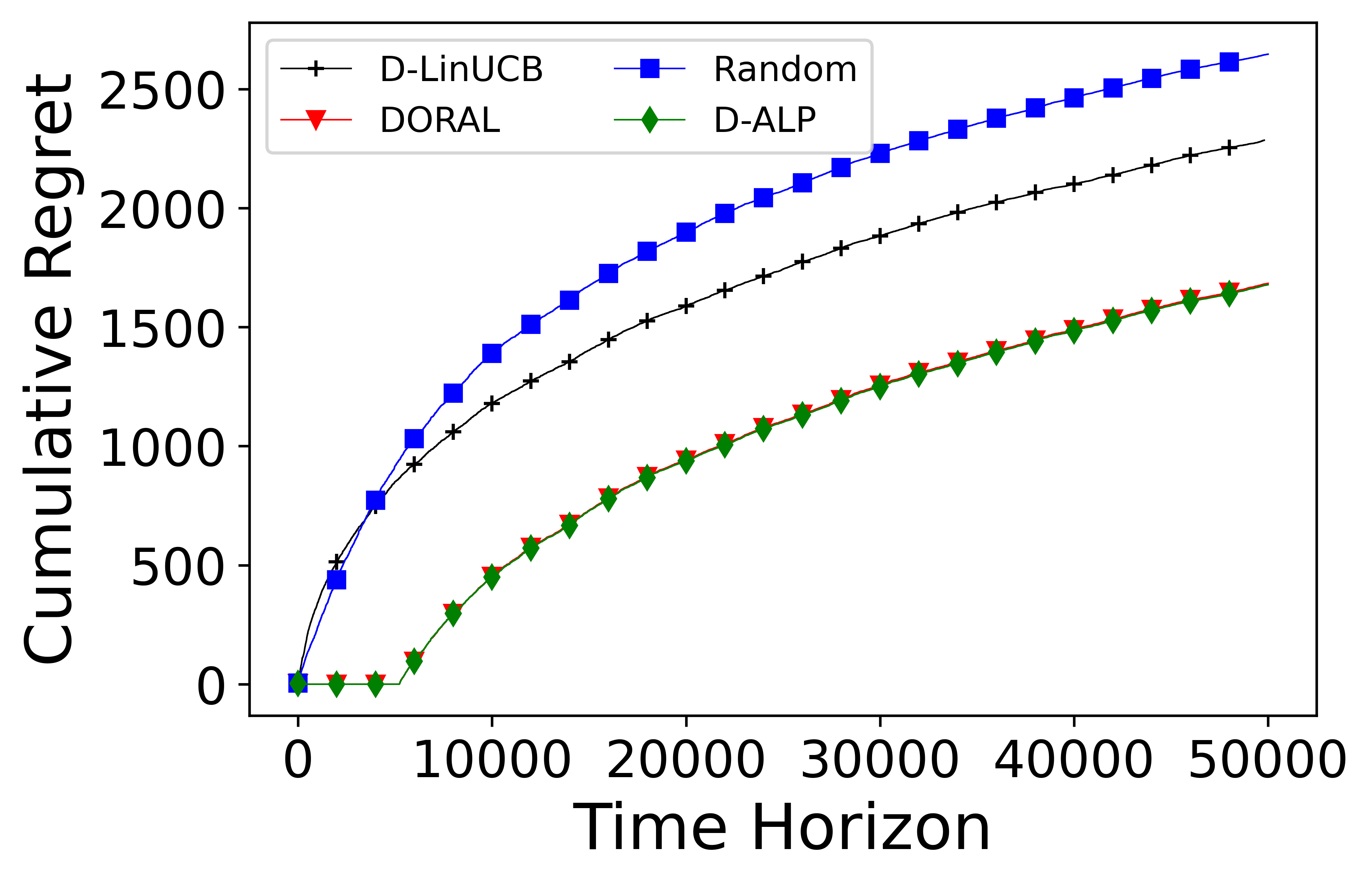}}
\caption{Similar delays}
\label{fig:result-1}
\end{figure}
\begin{figure}[t]
    \centering
    \subfigure[Geometric Delays]{\includegraphics[scale=0.35]{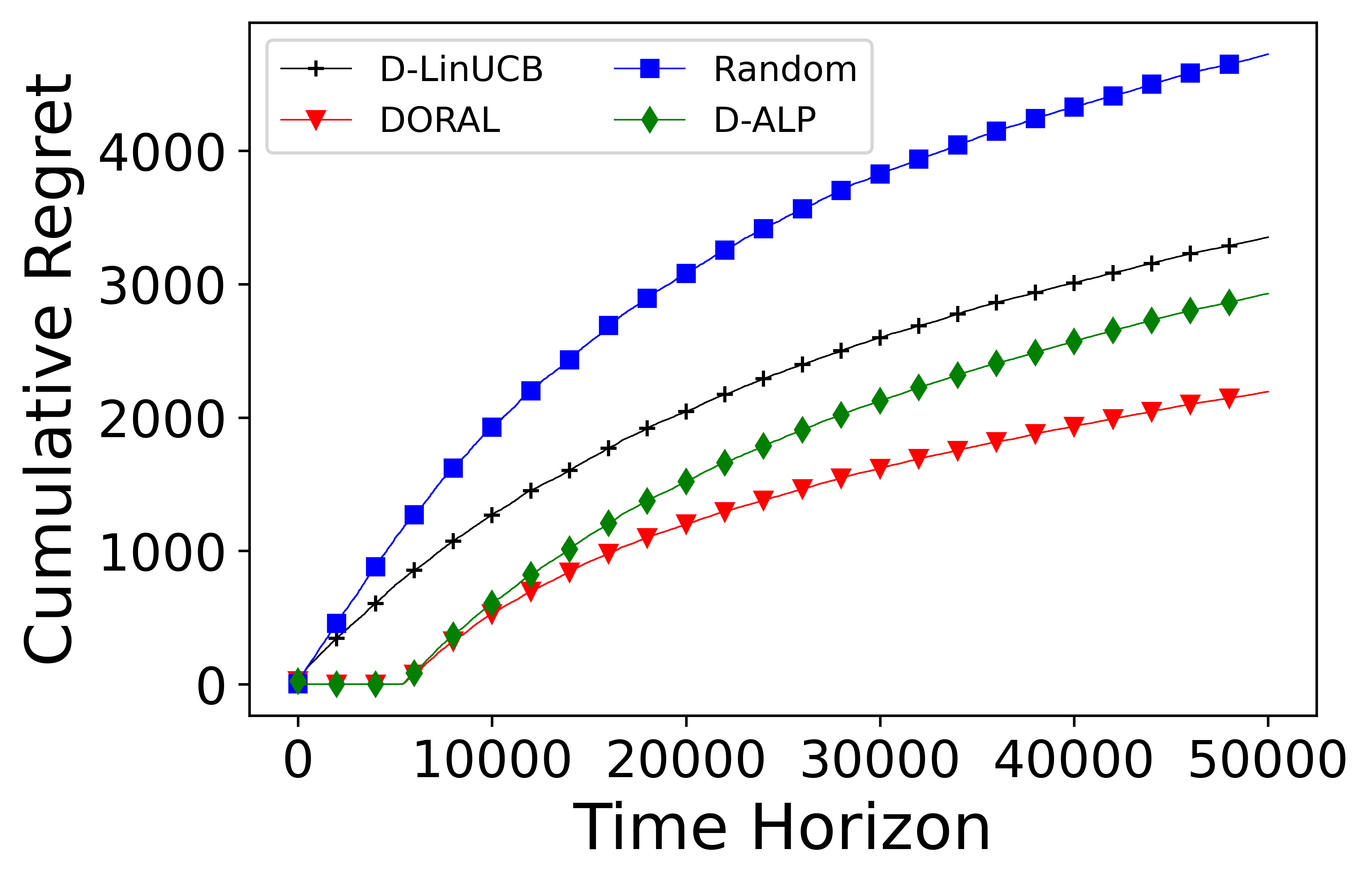}}
    \subfigure[Pareto Delays]{\includegraphics[scale=0.35]{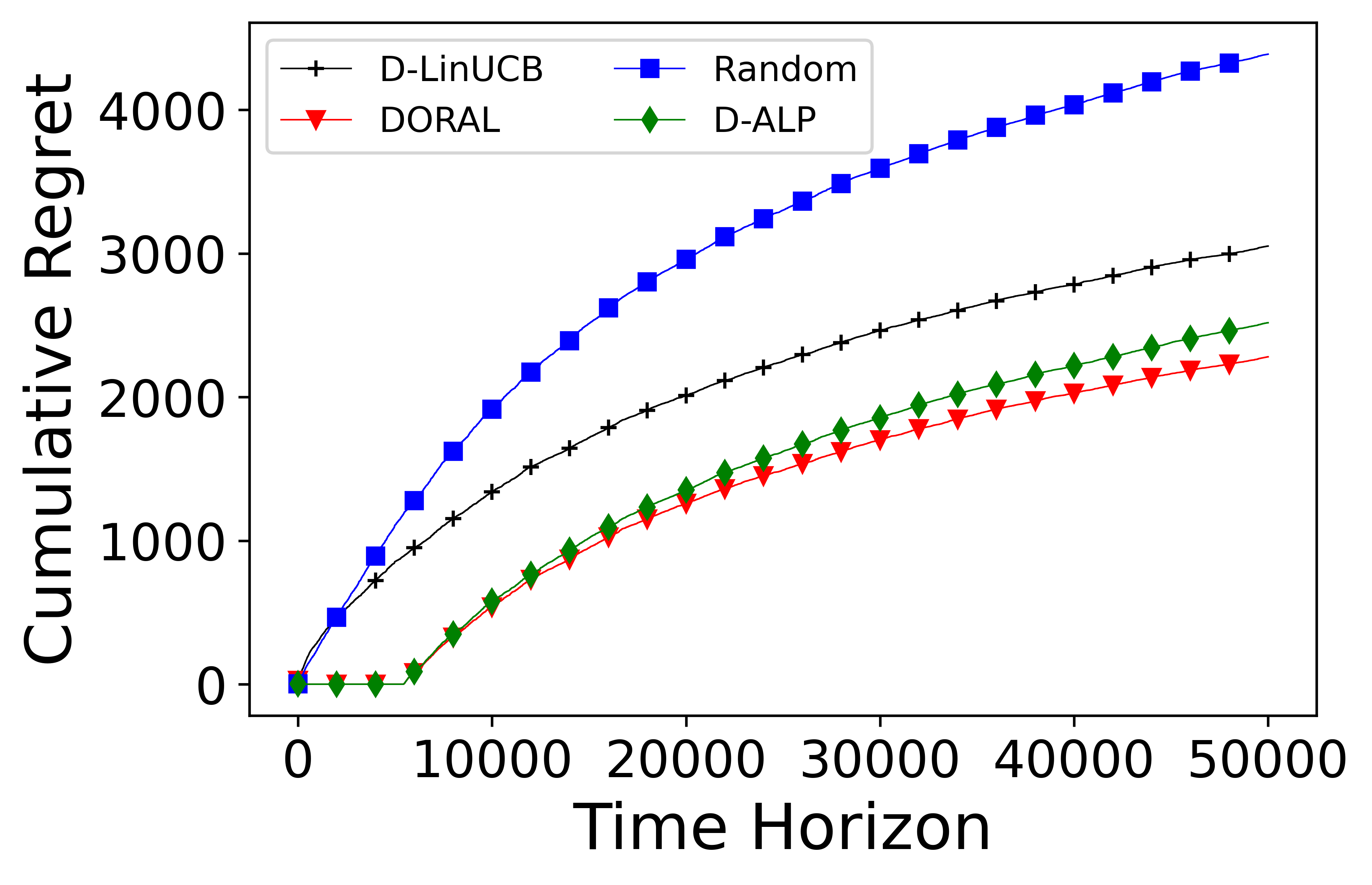}}
\caption{Diverse delays}
\label{fig:result-2}
\end{figure}
\section{Conclusion}
To tackle the challenges in resource allocation with delayed feedback, we developed a two-stage policy that can efficiently allocate resources while learning with delayed feedback. Also, we proposed a robust method to identify top responsive arms when information delays can be heavy-tailed. Future research involves simultaneously determining cut-off on the fly while ensuring efficient resource allocation. Context-dependent delays are more realistic and challenging when compared to arm-dependent delays in our setting. Also, in our setting, the feedback of different arms is equally important, although often, there exist different levels of urgency to consider in a resource allocation problem, e.g., in sharing limited medical supplies. Hence, another research direction is studying such a hierarchical structure in real-world applications.

\section*{Acknowledgement}
The work of S.M. was supported by Grant  01IS20051 and Grant 16KISK035 from the German Federal Ministry of Education and Research.

\bibliographystyle{splncs_srt}
\bibliography{world2024}

\end{document}